\def\isarxiv{true}
\newcommand{\version}[1]{\ifthenelse{\equal{\isarxiv}{true}}{{#1}}{\cite{lübsen2024safe}}}
\title[Towards Safe Multi-Task Bayesian Optimization]{Towards Safe Multi-Task Bayesian Optimization}
\newcommand{\diff}{\mathrm{d}}
\newcommand{\norm}[1]{\left\lVert#1\right\rVert}
\newcommand{\bx}{\bm{x}}
\renewcommand{\footnotesize}{\scriptsize}
\newcommand\equalhat{\mathrel{\stackon[1.5pt]{$=$}{\stretchto{%
    \scalerel*[\widthof{$=$}]{\wedge}{\rule{1ex}{3ex}}}{0.5ex}}}}
\newcolumntype{L}{>{$}l<{$}} 
\newcolumntype{C}{>{$}c<{$}} 
\newtheorem{assum}[theorem]{Assumption}
\begin{document}
\maketitle
\begin{abstract}%
    Bayesian optimization has emerged as a highly effective tool for the safe online optimization of systems, due to its high sample efficiency and noise robustness. To further enhance its efficiency, reduced physical models of the system can be incorporated into the optimization process, accelerating it. These models are able to offer an approximation of the actual system, and evaluating them is significantly cheaper. The similarity between the model and reality is represented by additional hyperparameters, which are learned within the optimization process. Safety is a crucial criterion for online optimization methods such as Bayesian optimization, which has been addressed by recent works that provide safety guarantees under the assumption of known hyperparameters. In practice, however, this does not apply. Therefore, we extend the robust Gaussian process uniform error bounds to meet the multi-task setting, which involves the calculation of a confidence region from the hyperparameter posterior distribution utilizing Markov chain Monte Carlo methods. Subsequently, the robust safety bounds are employed to facilitate the safe optimization of the system, while incorporating measurements of the models. Simulation results indicate that the optimization can be significantly accelerated for expensive to evaluate functions in comparison to other state-of-the-art safe Bayesian optimization methods, contingent on the fidelity of the models. The code is accessible on GitHub\footnote{\url{https://github.com/TUHH-ICS/2024-code-L4DC-Safe-Multi-Task-Bayesian-Optimization}}.
\end{abstract}

\begin{keywords}%
  Bayesian Optimization, Gaussian Processes, Controller Tuning, Safe Optimization
\end{keywords}

\section{Introduction}
Bayesian optimization (BO) is an iterative, learning-based, gradient-free, and global optimization method which gained attention in recent years. The method involves learning a probabilistic surrogate model of an arbitrary objective function in order to optimize it, while requiring minor assumptions. The utilization of Gaussian processes (GP) allows including prior knowledge of the objective which makes the procedure very suitable for expensive to evaluate functions and robust to noisy observations. 
Enhanced sample efficiency can be achieved by incorporating reduced models of the objective function into the optimization process, as demonstrated in a proof-of-principle study by \cite{Pousa2023,Letham2019}. The key tool, multi-task Gaussian process prediction, was initially presented in \cite{Bonilla2007}. This approach utilizes correlation matrices to depict the influence between various tasks, learned from available data. Subsequently, \cite{Swersky2013} introduced the first multi-task Bayesian optimization algorithm, where its superior efficiency was highlighted. The key idea revolves around incorporating extra models of the actual function, allowing the prediction of the real function (main task) to be estimated by evaluating the model functions (supplementary tasks) when there exists some correlation. Since evaluating the model functions is significantly more cost-effective in practice, the optimization process can be accelerated considerably, depending upon the fidelity of the supplementary tasks.

Many real-world optimization problems require the consideration of constraints to avoid the evaluation of inputs which invoke undesirable effects, e.g., damaging the system. Often, these constraints are also unknown and need to be learned online. The theoretical fundament for safe Bayesian optimization was inherited from bounding regrets via uniform error bounds in a multi armed bandit problem \cite{Snirivas2010} and improved with respect to performance by \cite{chowdhury17a}. Subsequently, \cite{Sui2015} used the results from \cite{Snirivas2010} to describe \texttt{SafeOpt}, the first safe Bayesian optimization method. Uniform error bounds are defined by scaling the posterior standard deviation by a constant. Typically, these bounds are of probabilistic nature, meaning they hold with high probability. In the previous mentioned works, the assumption is made that the unknown function is deterministic and belongs to the reproducing kernel Hilbert space (RKHS) defined by chosen kernel. Moreover, several additional methodologies have been developed such as the work by \cite{Lederer2019, Sun2021} where the unknown function is assumed to be a sample of the prior distribution defined by the Gaussian process. In statistics, the former approaches are known to operate in the frequentist setting whereas the latter operate in the Bayesian setting, see \cite{murphy2021} for more information about frequentist and Bayesian statistics. However, all these approaches assume that the complexity of the Gaussian process aligns perfectly with the objective, implying that the kernel hyperparameters are known. In case of the RKHS methods, the situation is even worse since they require also knowledge about an upper bound of the RKHS norm of the unknown function. Clearly, from a practical standpoint, this is usually not the case. Furthermore, this aspect is particularly crucial, especially in the context of a multi-task setting, wherein the main task is biased by the supplementary tasks. \cite{Capone2022} introduced a method to address this issue by establishing robust bounds on hyperparameters through a Bayesian framework (c.f. \cref{subsec:Learninghyps}), and \cite{fiedler21} proposed a robustness approach for the frequentist setting where the upper bound on the RKHS norm is still valid. However, these results are not applicable to a multi-task setting.

\paragraph{Contribution} The primary contribution involves introduction of a Bayesian optimization algorithm designed to ensure safe optimization of a system while incorporating measurements from various tasks. This is achieved by extending the outcomes of \cite{Capone2022} to a multi-task setting with the use of Lemma~\ref{lemma:gamma} and \ref{lemma:lambda}. Furthermore, we assume to operate in the Bayesian setting. To the best of the authors' knowledge, this is the first robustly safe Bayesian optimization algorithm in a multi-task setting. Finally, we underscore the significance of the proposal by conducting a benchmark comparison with other state-of-the-art Bayesian optimization methods.

\section{Fundamentals}
In Bayesian optimization, Gaussian processes are used to model an unknown objective function \(f:\mathcal{X} \longrightarrow \mathbb{R}\), where the domain \(\mathcal{X} \subseteq \mathbb{R}^d\) is a compact set of input parameters. In general, \(f\) is non-convex and learned online by evaluating the function at some inputs \( \bx\in\mathcal{X}\). The function values themselves are not accessible, rather noisy observations are made. This behavior can be modeled by additive Gaussian noise \(\epsilon \sim \mathcal{N}(0,\sigma_n^2)\), i.e., $y = f(\bx)+\epsilon$, where $y$ is the measured value and \(\sigma_n^2\) denotes the noise variance.
Furthermore, we assume to have a set of observations by \(\mathcal{D}\coloneqq\left\{(\bx_k,y_k), k=1,\dots, n\right\}\) which is composed of the evaluated inputs combined with the corresponding observations. This set can be considered as the training set. A more compact notation of all inputs of \(\mathcal{D}\) is given by the matrix \(X = [\bx_{1}, \dots, \bx_{n}]^T\) and of all observations by the vector \(\bm{y}=[y_1,\dots,y_n]^T\). With this data set, the Gaussian process creates a probabilistic model to predict $f(\bx)$.  These predictions serve as inputs for an acquisition function $\alpha(\cdot)$, which identifies new promising inputs likely to minimize the objective.
A common choice is expected improvement (EI) as initially described by \cite{Jones1998}. 

\subsection{Gaussian Processes}
\label{subsec:gaussian_processes}
A Gaussian process is fully defined by a mean function \(m(\bx)\) and a kernel \(k(\bx,\bx')\).
The function values $f(\bx)$ are modeled by normal distributions and the kernel determines the dependency between function values at different inputs
\begin{align}
\mathrm{cov}(f(\bx),f(\bx'))=k(\bx,\bx').
\label{eq:single_task_cov}
\end{align} Commonly used kernels are the spectral mixture, Mat\'ern or squared exponential kernel, where the latter is defined as $k_\mathrm{SE}(\bx,\bx') = \sigma_f^2\exp\left(-\frac{1}{2}(\bx-\bx')^T\Delta^{-2}(\bx-\bx')\right)$ with \(\Delta = \mathrm{diag}(\bm{\vartheta}) = \mathrm{diag}([\vartheta_1,\dots,\vartheta_d]^T)\).
The signal variance \(\sigma_f^2\), the lengthscales \(\bm{\vartheta}\) and the noise variance \(\sigma_n^2\) constitute the hyperparameters, allowing for the adjustment of the kernel function. For a more compact notation, we define the tuple of hyperparameters by $\bm{\theta} = (\sigma_f^2,\bm{\vartheta},\sigma_n^2)$.

Given the set of observations, the Gaussian process is used to predict the function values \(f_* \equalhat f(\bx_*)\) at unobserved test points \(\bx_*\in\mathcal{X}\) by determining the posterior distribution $p(f_*|X,\bm{y}) = \mathcal{N}(\mu(\bx_*),\sigma^2(\bx_*))$. 
As shown by \cite{Williams2006} the posterior distribution is Gaussian given by
\begin{align*}
        \mu_{\bm{\theta}}(\bx_*) &= K_{\bm{\theta}}(\bx_*,X)\left(K_{\bm{\theta}}+\sigma_n^2I\right)^{-1}\bm{y}\\
        \sigma^2_{\bm{\theta}}(\bx_*) &= K_{\bm{\theta},*}-K_{\bm{\theta}}(\bx_*,X)\left(K_{\bm{\theta}}+\sigma_n^2I\right)^{-1}K_{\bm{\theta}}(X,\bx_*),
\end{align*}
where $K_{\bm{\theta}} = K_{\bm{\theta}}(X,X)$ and $K_{\bm{\theta},*} = K_{\bm{\theta}}(\bx_*,\bx_*)$ are the Gram matrices of the training and test data, respectively. We use the subscript notation to

\subsubsection*{Learning Hyperparameters}
\label{subsec:Learninghyps}
The selection of hyperparameters is a challenging task due to their substantial impact on the predictive distribution. If selected inaccurately, the predictions do not agree with the true function and safeness conditions may not hold. The most common method is to select the hyperparameters such that the log marginal likelihood \eqref{eq:logmaglikelihood} is maximized, which provides a good trade-off between function complexity and accuracy with respect to the data, given by
\begin{align}
    \log p(\bm{y}|X,\bm{\theta}) = - \frac{1}{2}\log|\tilde{K}_{\bm{\theta}}|-\frac{N}{2}\log(2\pi)-\frac{1}{2}\bm{y}^T\tilde{K}_{\bm{\theta}}^{-1}\bm{y}
    \label{eq:logmaglikelihood}
\end{align}
 with $\tilde{K}_{\bm{\theta}}=K_{\bm{\theta}}+\sigma_n^2I$. The maximization can be carried out efficiently using gradient based methods. However, the log marginal likelihood is non-convex in most cases which makes global optimization challenging. It is commonly assumed that the initial hyperparameters are proximate to the global optimum. In scenarios with limited information about the underlying objective, selecting suitable hyperparameters becomes exceptionally difficult. Consequently, the optimization process may become trapped in a local optimum \cite{Williams2006}.

An advanced strategy was introduced by \cite{Capone2022}. Here, the authors consider the lengthscales in a single-task setting and perform model selection in a Bayesian sense. The idea is to replace the deterministically chosen initial lengthscales $\bm{\vartheta}\in\mathbb{R}^d$ by a prior distribution $p(\bm{\vartheta})$. Then, for a given data set $\mathcal{D}=\{X,\bm{y}\}$ the posterior distribution of the lengthscales can be computed by applying Bayes rule
\begin{align}
    p(\bm{\vartheta}|X,\bm{y}) = \frac{p(\bm{y}|X,\bm{\vartheta})p(\bm{\vartheta})}{p(\bm{y}|X)}.
    \label{eq:posterior_hyperparameters}
\end{align}
Note that computing \eqref{eq:posterior_hyperparameters} analytically is not tractable in general, however, one can use approximations such as Laplace approximation or Markov chain Monte Carlo (MCMC) methods to estimate the posterior. 
Considering the posterior distribution rather than deterministic values offers the opportunity to establish robust probability bounds wherein the majority of the probability mass lies, i.e.,
\begin{align}
    P_\delta=\Bigg\{(\bm{\vartheta}',\bm{\vartheta}'') \in \Theta^2\Bigg\vert\int_{\bm{\vartheta}'\leq\bm{\vartheta}\leq\bm{\vartheta}''} p(\bm{\vartheta}|X,\bm{y})\diff\bm{\vartheta} \geq 1-\delta\Bigg\},
\label{eq:lengthscale_bounds}
\end{align}
where $1-\delta$ is the confidence interval with $\delta\in(0,1)$.
After estimating the ranges, one can apply Lemma~3.3 and Theorem~3.5 from \cite{Capone2022} to obtain the new scaling factor 
\begin{align}
\bar{\beta} = \gamma^2\left(\max_{\bm{\vartheta}'\leq\bm{\vartheta}\leq\bm{\vartheta}''}\beta^\frac{1}{2}(\bm{\vartheta})+\frac{2\norm{y}_2}{\sigma_n}\right)^2,
\end{align}
with $\gamma^2 = \prod_{i=1}^d\frac{\vartheta_i''}{\vartheta_i'}$, that guarantees with probability $(1-\delta)(1-\rho)$ that
$|f(\bx)-\mu_{\bm{\vartheta}_0}(\bx)| \leq \bar{\beta}^\frac{1}{2}\sigma_{\bm{\vartheta}'}(\bx), \forall \bx \in\mathcal{X},$
where $\rho>0$ denotes the failure probability.

\subsection{Multi-Task Gaussian Processes}
So far, only scalar functions are considered, which is different from the considered setting. Since we want to include simulator observations, the GP needs to be extended to model vector-valued functions $\bm{f} = [f_1,\dots f_u]: \mathcal{X} \rightarrow \mathbb{R}^u$.  
To tackle inter-function correlations, the covariance function from \eqref{eq:single_task_cov} is extended by additional task inputs, i.e., $k((\bx,t),(\bx',t'))=\mathrm{cov}(f_t(\bx),f_{t'}(\bx'))$ where $t,t'\in\{1,\dots,u\}$ denote the task indices and $u$ is the total number of information sources. If the kernel can be separated into a task-depending $k_t$ and an input-depending kernel $k_x$, i.e., $k((\bx,t),(\bx',t')) = k_t(t,t')\, k_x(\bx,\bx')$, then the kernel is called separable which is used in most literature, e.g., \cite{Bonilla2007,Letham2019,Swersky2013}. Typically, this is represented by the intrinsic coregionalization model (ICM) \cite{Alvarez2011}, in which $k_x$ measures the dependency in the input space, while $k_t$ measures the dependency between tasks.

Since the task indices are integers and independent of the inputs, the task covariance function is usually substituted by constants $\sigma_{t,t'}^2$ which are treated as additional hyperparameters. Hence, defining the correlation matrix $\Sigma = [\sigma_{t,t'}]_{t,t'=1}^u$, we have $\Sigma\,k(\bx,\bx') = \mathrm{cov}(\bm{f}(\bx),\bm{f}(\bx'))$ which denotes the multi-task kernel. In this setting, it is reasonable to assume positive correlation between tasks solely.
Moreover, it is assumed that for each task there exists a data set $\mathcal{D}_i$ which is stacked into a global set $\bm{\mathcal{D}} \coloneqq \{\bm{X},\tilde{\bm{y}}\}$, where $\bm{X} = [X_1^T,\dots,X_u^T]^T$ and $\tilde{\bm{y}} = [\bm{y}_1^T,\dots,\bm{y}_u^T]^T$. Then, the Gram matrix is given by
\begin{align*}
\bm{K}_\Sigma = \bm{K}_\Sigma(\bm{X},\bm{X})= 
\begin{bmatrix} 
\sigma_{1,1}^2K_{1,1} & \dots & \sigma_{1,u}^2K_{1,u}\\
\vdots & \ddots & \vdots \\
\sigma_{u,1}^2K_{u,1} & \dots & \sigma_{u,u}^2K_{u,u}
\end{bmatrix},    
\end{align*}

where $K_{t,t'}$ are Gram matrices using data sources $t$ and $t'$. For notational reasons, we neglect $\bm{\theta}$ to denote the dependency of the Gram matrices on the remaining hyperparameters, which are equivalent for all tasks.
Note that if the covariance entries are zero, i.e., $\sigma_{t,t'} = 0, \forall t \neq t'$, the off-diagonal blocks of $\bm{K}_\Sigma$ are zero which means that all information sources are independent and can be divided into $u$ separate Gaussian processes. 
To perform inference in the multi-task setting, one simply needs to substitute the single-task Gram matrix $K$ and measurements $\bm{y}$ by their multi-task equivalents $\bm{K}_\Sigma$ and $\tilde{\bm{y}}$. 

\section{Extension of Uniform Error Bounds for Unknown Source Correlation}
Now, we introduce an extension of the uniform error bounds for unknown hyperparameters. We consider the correlation matrix $\Sigma\in \mathcal{C}$ to be the only uncertain hyperparameter and $\mathcal{C}\subset\mathbb{R}^{u\times u}$ to be the set of all positive definite correlation matrices, where $u$ denotes the number of sources. By definition, this type of matrices are real and symmetric. In addition, we introduce $\mathbb{P}(\cdot):\mathbb{R}^u\rightarrow [0,1]$ to denote the Gaussian measure. The results presented in this section and in \cite{Capone2022} can be combined to consider also the uncertainty of the lengthscales. 

In this work, we operate in the Bayesian setting, which is manifested in the following assumption:
\begin{assum}
The function \(\bm{f}(\cdot)\) is a sample of a Gaussian process with multi-task kernel \(\Sigma\, k(\cdot,\cdot):\mathcal{X}\times\mathcal{X} \longrightarrow \mathbb{R}^{u\times u}\) and hyperprior $p(\Sigma)$ of positive definite correlation matrices.
\label{assum:1}
\end{assum}
If a base kernel $k(\cdot,\cdot)$ is selected that satisfy the universal approximation property \cite{Micchelli2006}, then the multi-task kernel $\Sigma\,k(\cdot,\cdot)$ also satisfies the universal approximation property \cite{Caponnetto2008} which makes the assumption non-restrictive.
In addition, we assume that there exists a known scaling function $\beta: \mathcal{C}\rightarrow\mathbb{R}^+$ such that 
\begin{align}
\mathbb{P}\left(|\bm{f}(\bx)-\bm{\mu}_{\Sigma}(\bx)| \leq \beta^\frac{1}{2}(\Sigma)\bm{\sigma}_{\Sigma}(\bx),\quad \forall \bx \in\mathcal{X}\right) \geq 1-\rho,
\label{eq:safeness}
\end{align}
equivalently to Assumption 3.2 by \cite{Capone2022}. There exist different methods to define the scaling functions, e.g., \cite{Lederer2019}. If the input space $\mathcal{X}$ has finite cardinality, $\beta$ is constant and independent of the hyperparameters \cite{Snirivas2010}. 
In \eqref{eq:lengthscale_bounds}, the posterior probability density function is integrated over the interval from $\bm{\vartheta}'$ to $\bm{\vartheta}''$ which is selected such that some confidence is included. 
First, we need to define a relation for $\mathcal{C}$ to be able to order the members of the set.
Therefore, the function $h(\Sigma_1,\Sigma_2) = \max\mathrm{eig}\Sigma_1^{-1}\Sigma_2$ is introduced which maps two correlation matrices into the positive reals.
The reason for selecting this function follows from the definition of the error bounds in Lemma~\ref{lemma:gamma}. Using $h(\cdot,\cdot)$ we define the set
\begin{align}
\mathcal{C}_{\Sigma',\Sigma''} = \{\Sigma\in\mathcal{C}|h(\Sigma',\Sigma)\leq h(\Sigma',\Sigma'')\},
\label{eq:subset_sigma}
\end{align}
which is a subset of $\mathcal{C}$ and comprises all correlation matrices that have a smaller value in $h$ than two selected $(\Sigma',\Sigma'')$. Now we are able to define a set of bounding correlation matrices by
\begin{align*}
    \mathcal{C}_\delta=\Bigg\{(\Sigma',\Sigma'')\in\mathcal{C}^2\Bigg\vert\int_{\mathcal{C}_{\Sigma',\Sigma''}} p(\Sigma|\bm{X},\tilde{\bm{y}})\diff\Sigma \geq 1-\delta\Bigg\}.
\end{align*}
In words, we are searching for a bound $\Sigma',\Sigma''$ such that their corresponding set $\mathcal{C}_{\Sigma',\Sigma''}$ comprises enough members that establish a predefined confidence region on the posterior. 
After identifying a matching pair of bounds, we are able to modify the scaling function $\beta(\cdot)$ such that safeness for the uncertain hyperparameters can be guaranteed.
The extension is summarized in the following results.
\begin{lemma}
Let $\bm{\sigma}_{\Sigma'}(\bx),\bm{\sigma}_{\Sigma}(\bx)$ be the posterior variance conditioned on the data $\bm{\mathcal{D}}$ with different correlation matrices $\Sigma',\Sigma'' \in \mathcal{C}_\delta$, $\Sigma \in \mathcal{C}_{\Sigma',\Sigma''}$, and let  $\gamma^2 \geq h(\Sigma ',\Sigma'')$. Then
\begin{align*}
    \gamma^2\bm{\sigma}_{\Sigma'}(\bx)\quad \geq\quad \bm{\sigma}_{\Sigma}(\bx),\quad \forall \bx\in\mathcal{X}, \, \forall \Sigma\in\mathcal{C}_{\Sigma',\Sigma''}.
\end{align*}
\label{lemma:gamma}
\end{lemma}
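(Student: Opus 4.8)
The plan is to strip the statement of its probabilistic dressing and prove a purely deterministic fact: the Gaussian process posterior variance is monotone under the Loewner order of the (noise‑augmented) multi‑task Gram matrix, together with the observation that this Gram matrix is \emph{linear} in $\Sigma$. I fix an arbitrary task index and regard $\bm{\sigma}_\Sigma^2(\bx)$ as the scalar posterior variance of that task's function value at $\bx$; since the asserted inequality is meant entrywise, proving it for every task suffices. As a first step I translate the hypotheses into a matrix inequality: by the definition \eqref{eq:subset_sigma} of $\mathcal{C}_{\Sigma',\Sigma''}$ and $\gamma^2\ge h(\Sigma',\Sigma'')$ we have $\max\mathrm{eig}\,\Sigma'^{-1}\Sigma = h(\Sigma',\Sigma)\le h(\Sigma',\Sigma'')\le\gamma^2$; because $\Sigma'\succ0$, the matrix $\Sigma'^{-1}\Sigma$ is similar to the symmetric positive semidefinite matrix $\Sigma'^{-1/2}\Sigma\,\Sigma'^{-1/2}$, so this is equivalent to $\Sigma'^{-1/2}\Sigma\,\Sigma'^{-1/2}\preceq\gamma^2 I$, i.e. $\Sigma\preceq\gamma^2\Sigma'$. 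I would also record that $\gamma^2\ge1$, which holds in the regime of interest since $\Sigma'$ itself is meant to lie in the confidence region and $h(\Sigma',\Sigma')=\max\mathrm{eig}\,I=1\le h(\Sigma',\Sigma'')\le\gamma^2$.

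Next I would express $\bm{\sigma}_\Sigma^2(\bx)$ as a Schur complement. Stacking the training inputs of all tasks together with the single (task,$\,\bx$) test coordinate, the noise‑augmented joint covariance is $M_\Sigma=\bm{K}_\Sigma^{\mathrm{aug}}+D$, where $\bm{K}_\Sigma^{\mathrm{aug}}$ is the multi‑task Gram matrix over these points and $D$ is diagonal with $\sigma_n^2$ on the training coordinates and $0$ on the test coordinate; then $\bm{\sigma}_\Sigma^2(\bx)$ is the Schur complement of the leading, training‑coordinate block of $M_\Sigma$, which is positive definite because $\sigma_n^2>0$. The structural point is that every entry of $\bm{K}_\Sigma^{\mathrm{aug}}$ equals $\Sigma_{t,t'}\,k(\cdot,\cdot)$, so $\bm{K}_\Sigma^{\mathrm{aug}}=S(\Sigma\otimes K_{\mathrm{all}})S^{T}$ for a fixed binary matrix $S$ and the Gram matrix $K_{\mathrm{all}}$ of all occurring inputs. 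Hence $\bm{K}_\Sigma^{\mathrm{aug}}$ is linear in $\Sigma$, and from $\Sigma\preceq\gamma^2\Sigma'$ and $K_{\mathrm{all}}\succeq0$ one gets $(\gamma^2\Sigma'-\Sigma)\otimes K_{\mathrm{all}}\succeq0$ (a Kronecker product of positive semidefinite matrices), so that $\bm{K}_\Sigma^{\mathrm{aug}}\preceq\gamma^2\bm{K}_{\Sigma'}^{\mathrm{aug}}$.

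The third step is the monotonicity of the Schur complement itself. Writing $\bm{e}$ for the unit vector of the test coordinate, a positive semidefinite $M$ with positive definite training block satisfies the variational identity $\mathrm{Schur}(M)=\max\{\,y\in\mathbb{R}\;:\;M-y\,\bm{e}\bm{e}^{T}\succeq0\,\}$, obtained by taking the Schur complement of the (unchanged) training block inside $M-y\,\bm{e}\bm{e}^{T}$; this makes $M\mapsto\mathrm{Schur}(M)$ monotone under $\preceq$ and positively homogeneous. Since $\gamma^2\ge1$ gives $D\preceq\gamma^2 D$, chaining the estimates yields
\begin{align*}
    M_\Sigma=\bm{K}_\Sigma^{\mathrm{aug}}+D\;\preceq\;\gamma^2\bm{K}_{\Sigma'}^{\mathrm{aug}}+D\;\preceq\;\gamma^2\bm{K}_{\Sigma'}^{\mathrm{aug}}+\gamma^2 D\;=\;\gamma^2 M_{\Sigma'},
\end{align*}
and therefore $\bm{\sigma}_\Sigma^2(\bx)=\mathrm{Schur}(M_\Sigma)\le\mathrm{Schur}(\gamma^2 M_{\Sigma'})=\gamma^2\,\bm{\sigma}_{\Sigma'}^2(\bx)\le\gamma^4\,\bm{\sigma}_{\Sigma'}^2(\bx)$. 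Taking square roots and letting the task index range over $\{1,\dots,u\}$ gives $\bm{\sigma}_\Sigma(\bx)\le\gamma^2\bm{\sigma}_{\Sigma'}(\bx)$ for all $\bx\in\mathcal{X}$, which is the claim.

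The step I expect to be the main obstacle is this last one: the posterior variance is \emph{not} a monotone function of the prior kernel matrix in any naïve way, because changing $\Sigma$ simultaneously perturbs the test–test and test–train covariance blocks, so one cannot simply manipulate $(\bm{K}_\Sigma+\sigma_n^2 I)^{-1}$; the variational (best‑linear‑predictor) characterization of the Schur complement is what makes the monotonicity transparent and also handles the vector‑valued test case if desired. The only other delicate point is the bookkeeping of the observation‑noise block $D$, which is exactly what forces the mild requirement $\gamma^2\ge1$; the remaining ingredients — the eigenvalue‑to‑Loewner‑order translation and the Kronecker positivity argument — are routine.
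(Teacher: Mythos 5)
Your proof is correct and reaches the paper's conclusion, sharing its core reduction but differing in the two supporting mechanisms. The common skeleton is: translate $\gamma^2\geq h(\Sigma',\Sigma)$ into the matrix inequality $\gamma^2\Sigma'-\Sigma\geq 0$, lift this to a Loewner ordering of the joint (training-plus-test) multi-task Gram matrices, and conclude an ordering of posterior variances. For the lifting step the paper writes $\bm{K}_\Sigma$ as a Hadamard product $\bm{\Sigma}\circ\bar{\bm{K}}$, invokes the Schur product theorem, and needs a separate auxiliary lemma (proved via an SVD-plus-permutation argument) to show that the block matrix $\gamma^2\bm{\Sigma}'-\bm{\Sigma}$ is positive semidefinite precisely when $\gamma^2\Sigma'-\Sigma$ is; you instead realize the Gram matrix as a congruence $S\left((\cdot)\otimes K_{\mathrm{all}}\right)S^T$ of a Kronecker product, so positivity of $(\gamma^2\Sigma'-\Sigma)\otimes K_{\mathrm{all}}$ is immediate and is preserved by the selection matrix $S$ --- the same underlying fact (a Hadamard product is a principal submatrix of a Kronecker product), but your packaging dispenses with the auxiliary decomposition lemma. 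For the final step the paper simply cites Lemma~A.4 of Capone et al.\ (2022), whereas you prove the variance monotonicity self-contained via the variational characterization of the Schur complement; this makes explicit the bookkeeping of the noise block $D$ and the resulting requirement $\gamma^2\geq 1$, a condition the lemma statement does not formally guarantee but which is implicit in the intended setting (if $\Sigma'$ itself lies in $\mathcal{C}_{\Sigma',\Sigma''}$ then $h(\Sigma',\Sigma')=1\leq h(\Sigma',\Sigma'')\leq\gamma^2$) and is equally needed, though hidden, in the cited lemma. As a minor bonus, your chain actually yields the sharper bound $\bm{\sigma}_\Sigma(\bx)\leq\gamma\,\bm{\sigma}_{\Sigma'}(\bx)$ on the standard deviation before relaxing to the stated factor $\gamma^2$.
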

\begin{proof}
See \version{\cref{appendix:proof_gamma}}.
\end{proof}

Lemma~\ref{lemma:gamma} bounds the posterior variance for all correlation matrices in the set $\mathcal{C}_{\Sigma',\Sigma''}$. However, bounding the posterior variance solely is not sufficient since the error bounds also depend on the posterior mean. This is addressed in the following Lemma~\ref{lemma:lambda}.

\begin{lemma} \label{lemma:lambda}
Let $\bm{\mu}$ be a member of the reproducing kernel Hilbert space (RKHS) $\mathcal{H}''$ with kernel $K''(\bx,\bx') = \Sigma''\,k(\bx,\bx')$, and let $\mathcal{H}'$ be an RKHS with kernel $K'(\bx,\bx') = \Sigma'\,k(\bx,\bx')$. Then with $\lambda^2 \geq h(\Sigma'',\Sigma')$ it follows
\begin{align*}
    \lambda^2||\bm{\mu}||_{K'}^2 \quad \geq \quad ||\bm{\mu}||_{K''}^2.
\end{align*}
\end{lemma}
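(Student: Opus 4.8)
The plan is to reduce the claim to a Loewner-order inequality between $\lambda^2(\Sigma')^{-1}$ and $(\Sigma'')^{-1}$, exploiting the closed form of the norm of a separable vector-valued RKHS. First I would observe that, since $k(\cdot,\cdot)$ is a scalar kernel with RKHS $\mathcal{H}_k$ and $\Sigma',\Sigma''$ are positive definite, the spaces $\mathcal{H}'$ and $\mathcal{H}''$ coincide \emph{as function spaces} with $\{\bm{\mu}=(\mu_1,\dots,\mu_u)\mid \mu_t\in\mathcal{H}_k,\ t=1,\dots,u\}$ and differ only through their inner products (see the theory of vector-valued RKHS, \cite{Alvarez2011}); in particular the statement is well posed because $\bm{\mu}\in\mathcal{H}''$ automatically lies in $\mathcal{H}'$.

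Next I would establish the key identity: for any symmetric positive definite $A$ and any $\bm{\mu}$ with components $\mu_t\in\mathcal{H}_k$,
\[
\norm{\bm{\mu}}_{Ak}^2 \;=\; \sum_{s,t=1}^u (A^{-1})_{st}\,\langle\mu_s,\mu_t\rangle_{\mathcal{H}_k} \;=\; \mathrm{tr}\!\left(A^{-1}G\right),
\]
where $G\coloneqq[\langle\mu_s,\mu_t\rangle_{\mathcal{H}_k}]_{s,t=1}^u\succeq 0$ is the Gram matrix of the component functions. This follows directly from the reproducing property $\langle\bm{\mu},Ac\,k(\cdot,\bx)\rangle_{Ak}=c^T\bm{\mu}(\bx)$: one checks that the bilinear form on the right-hand side above is reproducing on the dense span of $\{Ac\,k(\cdot,\bx)\}$, hence equals the RKHS inner product.

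With this identity, $\lambda^2\norm{\bm{\mu}}_{K'}^2-\norm{\bm{\mu}}_{K''}^2=\mathrm{tr}\!\big((\lambda^2(\Sigma')^{-1}-(\Sigma'')^{-1})G\big)$, which is nonnegative for every $G\succeq 0$ as soon as $M\coloneqq\lambda^2(\Sigma')^{-1}-(\Sigma'')^{-1}\succeq 0$ (because $\mathrm{tr}(MG)\ge 0$ for all $G\succeq 0$ whenever the symmetric matrix $M$ is positive semidefinite). It then remains to show that $\lambda^2\ge h(\Sigma'',\Sigma')=\max\mathrm{eig}\,(\Sigma'')^{-1}\Sigma'$ implies $M\succeq 0$. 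Since $(\Sigma'')^{-1}\Sigma'$ is similar to the symmetric positive definite matrix $S\coloneqq(\Sigma'')^{-1/2}\Sigma'(\Sigma'')^{-1/2}$, the two share eigenvalues, so $\lambda^2\ge\max\mathrm{eig}\,S$ gives $\lambda^2 I\succeq S$; inverting (the map $X\mapsto X^{-1}$ is order-reversing on positive definite matrices) yields $\lambda^{-2}I\preceq S^{-1}=(\Sigma'')^{1/2}(\Sigma')^{-1}(\Sigma'')^{1/2}$, and a congruence transformation by $(\Sigma'')^{-1/2}$ gives $\lambda^{-2}(\Sigma'')^{-1}\preceq(\Sigma')^{-1}$, i.e.\ $M\succeq 0$.

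I expect the main obstacle to be the first two steps — carefully justifying that $\mathcal{H}'$ and $\mathcal{H}''$ agree as sets and that the explicit norm formula $\norm{\bm{\mu}}_{Ak}^2=\mathrm{tr}(A^{-1}G)$ holds — rather than the final linear-algebra manipulations; the latter are routine and, incidentally, make transparent why $h(\Sigma_1,\Sigma_2)=\max\mathrm{eig}\,\Sigma_1^{-1}\Sigma_2$ is the right quantity for comparing correlation matrices, mirroring the role it plays for the variance in \cref{lemma:gamma} (with the arguments of $h$ interchanged).
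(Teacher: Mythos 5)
Your proposal is correct and follows the same overall route as the paper's proof: both rest on the identity $\norm{\bm{\mu}}_{\Sigma k}^2=\sum_{s,t}(\Sigma^{-1})_{st}\langle\mu_s,\mu_t\rangle_k$ for the separable vector-valued kernel, followed by a reduction to the matrix inequality $\lambda^2\Sigma'^{-1}-\Sigma''^{-1}\geq 0$. The execution differs in two minor but worthwhile ways. First, the paper derives the norm identity through an explicit factorization $\Sigma=BB^T$, pseudoinverses and Kronecker/vec manipulations, whereas you verify the reproducing property of the candidate inner product directly on the dense span; your route is cleaner and avoids the somewhat delicate $B^\dag$ bookkeeping. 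Second, for the final step the paper invokes the Schur product theorem on $(\lambda^2\Sigma'^{-1}-\Sigma''^{-1})\circ M$, while you use $\mathrm{tr}(MG)\geq 0$ for positive semidefinite $M,G$; both are valid, but the trace argument is more direct and makes explicit that positive semidefiniteness of $\lambda^2\Sigma'^{-1}-\Sigma''^{-1}$ is exactly what is needed uniformly over all component Gram matrices $G$. You also spell out the similarity/congruence argument showing that $\lambda^2\geq\max\mathrm{eig}\,\Sigma''^{-1}\Sigma'$ implies $\lambda^2\Sigma'^{-1}\geq\Sigma''^{-1}$, a step the paper states without proof.
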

\begin{proof}
See \version{\cref{appendix:proof_lambda}}.
\end{proof}

Lemma~\ref{lemma:lambda} is used to bound the posterior mean in different RKHS. In contrast to the unknown function $\bm{f}$ which is not restricted to the RKHS as stated in Assumption~\ref{assum:1}, the posterior mean is a member of the RKHS since $\bm{\mu}(\bx_*) = \langle \bm{\alpha}, K(\bx_*,\cdot)\rangle_K$ with $\bm{\alpha} = (\bm{K}_\Sigma+\sigma_n^2\,I)^{-1}\tilde{\bm{y}}$. 
With the two intermediate results robust error bounds in the multi-task setting can be specified as summarized in Theorem~\ref{theorem:betabar}.

\begin{theorem}
\label{theorem:betabar}
Let Assumption~\ref{assum:1} hold, and assume there exists a scaling function $\beta(\cdot)$ and a failure probability $\rho$ which are known such that \eqref{eq:safeness} holds. Let $(\Sigma',\Sigma'') \in \mathcal{C}_\delta$ with posterior of hyperparameters $p(\Sigma|\bm{X},\tilde{\bm{y}})$, let $\bm{\sigma}_{\Sigma'}^2$ be the posterior variance and $\bm{\mu}_{\Sigma_0}$ the posterior mean obtained with correlation matrices $\Sigma'$ and $\Sigma_0\in \mathcal{C}_{\Sigma',\Sigma''}$, respectively, and select $\gamma^2 \geq h(\Sigma',\Sigma'')$ and $\lambda^2 \geq \max_{\Sigma\in\mathcal{C}_{\Sigma',\Sigma''}}$\textcolor{orange}{$h(\Sigma,\Sigma')$\footnote{\textcolor{orange}{Typo: previously $h(\Sigma'',\Sigma)$, which is wrong because we want to bound $\lambda^2 ||\mu||_{K'}^2\geq ||\mu||_{K}^2\quad \forall \Sigma \in \mathcal{C}_{\Sigma',\Sigma''}$, cf. Lemma~\ref{lemma:lambda}}}}. Then with
\begin{align*}
    \bar{\beta} = \left(\lambda\frac{2\norm{\tilde{\bm{y}}}_2}{\sigma_n}+\gamma\max_{\Sigma \in\mathcal{C}_{\Sigma',\Sigma''}}\beta^\frac{1}{2}(\Sigma)\right)^2
\end{align*}
we have
\begin{align*}
    \mathbb{P}\left(\lvert \bm{f}(\bx)-\bm{\mu}_{\Sigma_0}(\bx)\rvert \leq \bar{\beta}^\frac{1}{2}\bm{\sigma}_{\Sigma'},\quad \forall \bx\in\mathcal{X}\right) \geq (1-\delta)(1-\rho).
\end{align*}
\end{theorem}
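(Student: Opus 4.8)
The plan is to reduce the claim to a deterministic inequality that holds on a ``good event'' of probability at least $(1-\delta)(1-\rho)$, the event being built from the two sources of uncertainty: the hyperparameter posterior and the function given the hyperparameters. Work in the Bayesian model of Assumption~\ref{assum:1}. Let $\Sigma^\star$ denote the correlation matrix of the kernel from which $\bm f$ is sampled; conditioned on $\bm{\mathcal{D}}$ it is distributed according to $p(\Sigma\,|\,\bm X,\tilde{\bm y})$. Put $E_1=\{\Sigma^\star\in\mathcal{C}_{\Sigma',\Sigma''}\}$; since $(\Sigma',\Sigma'')\in\mathcal{C}_\delta$ we have $\mathbb{P}(E_1)=\int_{\mathcal{C}_{\Sigma',\Sigma''}}p(\Sigma\,|\,\bm X,\tilde{\bm y})\,\diff\Sigma\geq 1-\delta$. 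Read \eqref{eq:safeness} as holding conditionally on the drawn $\Sigma^\star$, so that $\mathbb{P}(E_2\mid\Sigma^\star)\geq 1-\rho$ with $E_2=\{\lvert\bm f(\bx)-\bm\mu_{\Sigma^\star}(\bx)\rvert\leq\beta^{1/2}(\Sigma^\star)\bm\sigma_{\Sigma^\star}(\bx)\ \forall\bx\in\mathcal{X}\}$. By the tower rule, since $\mathbf{1}_{E_1}$ is $\Sigma^\star$-measurable, $\mathbb{P}(E_1\cap E_2)=\mathbb{E}\!\left[\mathbf{1}_{E_1}\mathbb{P}(E_2\mid\Sigma^\star)\right]\geq(1-\rho)\mathbb{P}(E_1)\geq(1-\delta)(1-\rho)$, so it suffices to prove the stated bound, for all $\bx\in\mathcal{X}$, on $E_1\cap E_2$.

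On $E_1\cap E_2$ I would start from
\[
    \lvert\bm f(\bx)-\bm\mu_{\Sigma_0}(\bx)\rvert\ \leq\ \lvert\bm f(\bx)-\bm\mu_{\Sigma^\star}(\bx)\rvert\ +\ \lvert\bm\mu_{\Sigma^\star}(\bx)-\bm\mu_{\Sigma_0}(\bx)\rvert .
\]
For the first summand, $E_2$ gives $\lvert\bm f(\bx)-\bm\mu_{\Sigma^\star}(\bx)\rvert\leq\beta^{1/2}(\Sigma^\star)\bm\sigma_{\Sigma^\star}(\bx)$; since $\Sigma^\star\in\mathcal{C}_{\Sigma',\Sigma''}$ we bound $\beta^{1/2}(\Sigma^\star)\leq\max_{\Sigma\in\mathcal{C}_{\Sigma',\Sigma''}}\beta^{1/2}(\Sigma)$, and Lemma~\ref{lemma:gamma} with $\gamma^2\geq h(\Sigma',\Sigma'')$ applied to the posterior variances gives $\bm\sigma_{\Sigma^\star}(\bx)\leq\gamma\,\bm\sigma_{\Sigma'}(\bx)$ in terms of standard deviations. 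Hence the first summand is at most $\gamma\big(\max_{\Sigma\in\mathcal{C}_{\Sigma',\Sigma''}}\beta^{1/2}(\Sigma)\big)\bm\sigma_{\Sigma'}(\bx)$.

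For the mean discrepancy I would use $\lvert\bm\mu_{\Sigma^\star}(\bx)-\bm\mu_{\Sigma_0}(\bx)\rvert\leq\lvert\bm\mu_{\Sigma^\star}(\bx)\rvert+\lvert\bm\mu_{\Sigma_0}(\bx)\rvert$ and bound each term by a multiple of $\bm\sigma_{\Sigma'}(\bx)$. Every posterior mean is a member of the RKHS of its own kernel, and since its representer coefficients are $\bm\alpha=(\bm K_\Sigma+\sigma_n^2I)^{-1}\tilde{\bm y}$, its squared RKHS norm satisfies $\bm\alpha^\top\bm K_\Sigma\bm\alpha\leq\bm\alpha^\top(\bm K_\Sigma+\sigma_n^2I)\bm\alpha=\tilde{\bm y}^\top(\bm K_\Sigma+\sigma_n^2I)^{-1}\tilde{\bm y}\leq\norm{\tilde{\bm y}}_2^2/\sigma_n^2$. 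I would then (i) transfer these RKHS norms to the reference kernel $K'=\Sigma'\,k$ using Lemma~\ref{lemma:lambda}, which introduces the factor $\lambda$ --- and this is exactly where the orientation of $h(\cdot,\cdot)$ in the choice of $\lambda$ matters, as the statement's footnote flags --- and (ii) bound the pointwise value of an RKHS element $g$ by $\lvert g(\bx)\rvert\leq\norm{g}_{K'}\bm\sigma_{\Sigma'}(\bx)$, via the reproducing property together with the standard identity that the $K'$-norm of the posterior residual feature $K'(\bx,\cdot)-\bm K_{\Sigma'}(\bx,\bm X)(\bm K_{\Sigma'}+\sigma_n^2I)^{-1}K'(\bm X,\cdot)$ is at most $\bm\sigma_{\Sigma'}(\bx)$ --- the same device as in the proof of Theorem~3.5 of \cite{Capone2022}. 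Summing the two contributions gives $\lvert\bm\mu_{\Sigma^\star}(\bx)-\bm\mu_{\Sigma_0}(\bx)\rvert\leq\lambda\big(2\norm{\tilde{\bm y}}_2/\sigma_n\big)\bm\sigma_{\Sigma'}(\bx)$, and adding the bound on the first summand yields $\lvert\bm f(\bx)-\bm\mu_{\Sigma_0}(\bx)\rvert\leq\big(\lambda\,2\norm{\tilde{\bm y}}_2/\sigma_n+\gamma\max_{\Sigma\in\mathcal{C}_{\Sigma',\Sigma''}}\beta^{1/2}(\Sigma)\big)\bm\sigma_{\Sigma'}(\bx)=\bar\beta^{1/2}\bm\sigma_{\Sigma'}(\bx)$ on $E_1\cap E_2$.

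The step I expect to be the main obstacle is the mean-discrepancy bound, for two reasons. First, the RKHS argument must be arranged so that it produces the \emph{posterior} standard deviation $\bm\sigma_{\Sigma'}(\bx)$ rather than the strictly larger prior standard deviation $\sqrt{K'(\bx,\bx)}$; this exploits that the posterior means lie in the span of the training features $\{K'(\bx_i,\cdot)\}$, so the relevant feature residual is the posterior one. Equivalently, one can bound $\lvert\bm\mu_\Sigma(\bx)\rvert=\lvert\bm K_\Sigma(\bx,\bm X)(\bm K_\Sigma+\sigma_n^2I)^{-1}\tilde{\bm y}\rvert$ directly and use $\bm K_\Sigma(\bx,\bm X)(\bm K_\Sigma+\sigma_n^2I)^{-2}\bm K_\Sigma(\bm X,\bx)\leq\sigma_n^{-2}\bm\sigma_\Sigma^2(\bx)$, which follows from $\sigma_n^2(\bm K_\Sigma+\sigma_n^2I)^{-2}+(\bm K_\Sigma+\sigma_n^2I)^{-1}\preceq\bm K_\Sigma^{-1}$ on the range of $\bm K_\Sigma$ --- along this route the constant comes out as $\gamma$ (after a further application of Lemma~\ref{lemma:gamma}) rather than $\lambda$. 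Second, one must settle the correct direction of $h(\Sigma,\Sigma')$ versus $h(\Sigma',\Sigma)$ in the definition of $\lambda$ so that Lemma~\ref{lemma:lambda} applies as needed --- precisely the typo acknowledged in the statement. The remaining ingredients --- the probability bookkeeping, the triangle inequalities, and Lemma~\ref{lemma:gamma} --- are routine, and note that $\bm f$, being merely a sample path under Assumption~\ref{assum:1} and not an RKHS element, enters the argument only through \eqref{eq:safeness}.
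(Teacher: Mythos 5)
Your proposal follows essentially the same route as the paper, whose proof is itself only a pointer to Theorem~3.7 of \cite{Capone2022} with Lemma~\ref{lemma:gamma} and Lemma~\ref{lemma:lambda} substituted for the corresponding single-task lemmas: the probability bookkeeping over $E_1\cap E_2$, the triangle inequality around $\bm\mu_{\Sigma^\star}$, the bound $\gamma\max_\Sigma\beta^{1/2}(\Sigma)\,\bm\sigma_{\Sigma'}$ on the first term, and the RKHS-norm bound $\norm{\tilde{\bm y}}_2/\sigma_n$ on each posterior mean are all exactly the intended argument. The one place where your sketch, as written, does not quite close is step (ii) of the mean-discrepancy bound: the reproducing property gives $\lvert g(\bx)\rvert\leq\norm{g}_{K'}\sqrt{K'(\bx,\bx)}$ (the \emph{prior} standard deviation), and the residual-feature identity you invoke controls $\lvert g(\bx)-K'(\bx,\bm X)(\bm K_{\Sigma'}+\sigma_n^2I)^{-1}g(\bm X)\rvert$ rather than $\lvert g(\bx)\rvert$ itself, so an additional argument is needed to dispose of the interpolant term. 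Your second, ``direct'' route does not have this problem: the matrix inequality $\sigma_n^2(\bm K_\Sigma+\sigma_n^2I)^{-2}+(\bm K_\Sigma+\sigma_n^2I)^{-1}\preceq\bm K_\Sigma^{\dagger}$ on the range of $\bm K_\Sigma$ is correct and yields $\lvert\bm\mu_\Sigma(\bx)\rvert\leq(\norm{\tilde{\bm y}}_2/\sigma_n)\bm\sigma_\Sigma(\bx)$, after which Lemma~\ref{lemma:gamma} gives the factor $\gamma$ on the mean term --- this is in fact the mechanism behind the single-task bound $\bar\beta=\gamma^2(\max\beta^{1/2}+2\norm{y}_2/\sigma_n)^2$ quoted in the paper's Section~2.1, so it proves a version of \cref{theorem:betabar} with $\gamma$ in place of $\lambda$; to obtain the stated constant with $\lambda$ one must go through Lemma~\ref{lemma:lambda} as the paper intends, with the corrected orientation of $h$ that you and the footnote both identify. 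Overall the decomposition and the roles of the two lemmas match the paper; only the pointwise-evaluation step needs the tightening you yourself supply.
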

\begin{proof}
The result can be obtained by following the same steps as in Theorem~3.7 by \cite{Capone2022} where the scaling parameter $\gamma$ from Lemma~\ref{lemma:gamma} replaces Lemma~3.3, and $\lambda$ from Lemma~\ref{lemma:lambda} replaces Lemma~A.7 in \cite{Capone2022}. We take the maximum $\lambda$ over the set $\mathcal{C}_{\Sigma',\Sigma''}$ to ensure safeness for every $\Sigma_0\in\mathcal{C}_{\Sigma',\Sigma''}$. For $\gamma$ this follows immediately from the definition of the set $\mathcal{C}_{\Sigma',\Sigma''}$ in \eqref{eq:subset_sigma}.
\end{proof}
The correlation bounds $(\Sigma',\Sigma'')$ and scaling parameters $\gamma,\lambda$ can be computed numerically, for example, via comparing samples from the hyperparameter posterior which can be generated with MCMC methods. In general, the computational cost of inverting correlation matrices is lower when compared to computing the posterior distribution of a Gaussian process, as these matrices are notably smaller than the Gram matrix. The highest time consumption comes from computing the posterior distribution and generating posterior samples. Regarding the computation of posterior distributions, there are several approximation methods that can be employed to enhance the inference speed when dealing with a substantial amount of data points \cite{Snelson2005,titsias09a}.

\section{Safe Multi-Task Bayesian Optimization}
\label{sec:SaMSBO}
In safe Bayesian optimization, the goal is to minimize a multi-output objective $\bm{f}(\bx)$ while including constraints $\bm{g}(\bx)$ for each $f_i(\bx)$, i.e., $\min_{\bx\in\mathcal{X}} \bm{f}(\bx) \mathrm{\quad s.\,t.\quad} \bm{g}(\bx)\geq 0.$
Frequently $\bm{g}(\bx) = T-\bm{f}(\bx)$, where $T$ is a safety threshold. 
Nonetheless, all the results are applicable to arbitrary constraints that fulfill Assumption~\ref{assum:1}. Moreover, it is necessary to possess information regarding an initial safe set, denoted as $\mathcal{S}_0$ which comprises at least one input that fulfills the constraints. This assumption is widespread, as asserted by \cite{Kirschner2019,Sui2015}, even in practical scenarios where an initial estimation is typically provided. Note that since $\bm{f}(\bx)$ is unknown also $\bm{g}(\bx)$ is unknown according to the definition of the constraints and needs to be learned online. This means that only stochastic statements can be made about satisfying the constraints. Similarly to \texttt{SafeOpt-MC} in \cite{Berkenkamp2021}, the safe set is defined to be $\mathcal{S} \coloneqq \cap_{i=1}^u \{\bx\in\mathcal{X}|\mu_{\Sigma',i}(\bx)+\bar{\beta}^\frac{1}{2}\sigma_{\Sigma',i}(\bx) \leq T\}$, which includes inputs where the constraints are fulfilled with high probability according to Theorem~\ref{theorem:betabar}. Starting from the initial safe set, the domain is explored to find the minimum. There exist various approaches to conduct this optimization, in \cite{Berkenkamp2016} the most uncertain points in a subset of $\mathcal{S}$ are repetitively evaluated. In \cite{Sui2018} and \cite{Luebsen2023}, the exploration and exploitation phases are separated, meaning that the algorithm first focuses on expanding $\mathcal{S}$ and then on optimization. The commonality among all methods is their requirement for numerous evaluations of the objective function. This can be reduced by including simulations of the objective in the optimization. 
\begin{figure}[t]
    \centering
    \includegraphics[width=\textwidth]{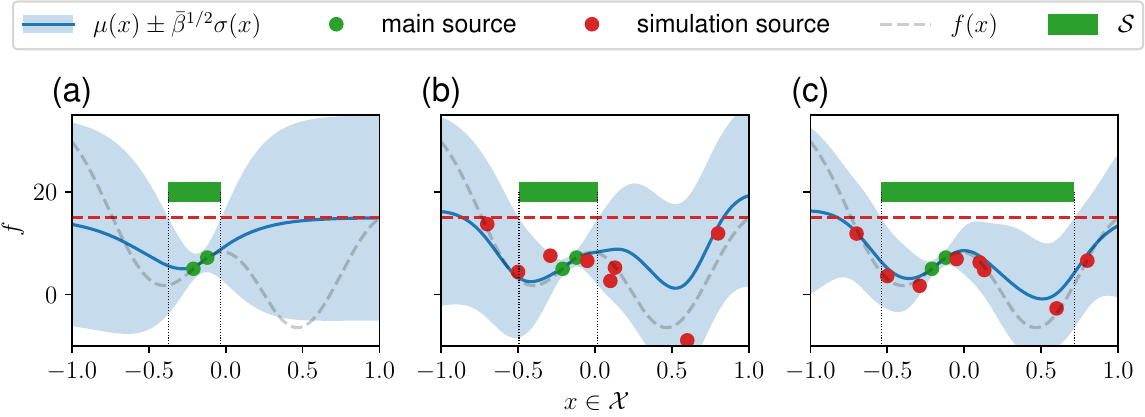}
    \caption{Overview of different safe Bayesian optimization settings with safety threshold $T$ denoted by "\textcolor{red}{\textbf{-\,-\,-}}". (a) shows the single-task setting, where no simulation samples are considered, and the safe region is the smallest. (b) visualizes the multi-task setting with slight correlation and (c) with high correlation. In both cases, The multi-task setting increases the safe region.}
    \label{fig:safebo_overview}
\end{figure}

\Cref{fig:safebo_overview} compares the safe region of the naive single-task optimization (a) with the multi-task setting where a low fidelity (b) and a high fidelity model (c) are considered. 
The fidelity of a model is reflected in the correlation matrix; the higher the agreement of model and the ground truth, the higher the information quality of the function value. \Cref{fig:safebo_overview} (b) shows that even small correlations extend the safe set.

The proposed method is summarized in \cref{algo:safemultibo}. We start with an initial safe set, a scaling function $\beta(\cdot)$ and a GP equipped with an appropriate hyperprior that reflects the initial guess about the hyperparameter. In the loop, posterior samples are generated and collected in the set $\mathcal{P}$. The samples allow to approximate the robust scaling parameter from Theorem~\ref{theorem:betabar}. In addition, $\Sigma'$ is loaded into the GP model. Then, the predictive distribution is used by an acquisition function to identify promising inputs at which the objective functions are evaluated, and both the GP model and the data set are updated. Finally, after the loop terminates, the best input of the main task is returned.

\IncMargin{1.5em}
\begin{algorithm2e}[t]
\DontPrintSemicolon
\KwIn{Initial safe set \(\mathcal{S}_0\), multi-task Gaussian process model $\mathcal{GP}$ with hyperpriors $p(\Sigma)$, scaling function $\beta(\cdot)$}
\KwOut{\(\bm{x}_\mathrm{opt}\)}
\While{termination condition not true}{
    $\mathcal{P} \leftarrow  \mathrm{MCMC}(\mathcal{GP},p(\Sigma))$  \tcp*[f]{\footnotesize generate samples via MCMC methods}\\ 
     $\bar{\beta}^{1/2}\leftarrow \mathcal{P}$, $\mathcal{GP}\leftarrow\Sigma'$ \tcp*[f]{\footnotesize Determine $\bar{\beta}$ from samples according to Theorem~\ref{theorem:betabar}}\\
     \tcp*[f]{\footnotesize load $\Sigma'$ into GP}\\
    $\mathcal{GP},\bm{\mathcal{D}}\leftarrow \mathrm{BO}(\mathcal{GP})$ \tcp*[f]{\footnotesize Perform BO step for all tasks and update GP model}\\ 
    
}
\(\bx_\mathrm{opt} \leftarrow \arg\,\min\bm{\mathcal{D}}\) \tcp*[f]{\footnotesize return best main task solution}\;
\caption{Safe Multi-Source Bayesian Optimization (\texttt{SaMSBO}) \label{algo:safemultibo}}
\end{algorithm2e}

\section{Simulation Results}
\label{sec:simulationresults}
In this section we demonstrate the proposed algorithm in simulation. The considered plant models the laser-based synchronization (LbSync) system \cite{Schulz2015} at European XFEL, akin to the system employed in \cite{Luebsen2023} and depicted in \cref{fig:lbsync}. The difference lies in the fact that all models $G$ represent the same laser models in this scenario, in other words, $G_i = G_j, \forall i,j=1,\dots,N$. The filter models $F_r$ and $F_{1:N}$ colorize the white Gaussian noise inputs $w_{1:N+1}$ to model environmental disturbances, e.g., vibrations, temperature changes and humidity. Throughout the optimization procedures, we operate under the assumption that, alongside the main task, there exist two supplementary tasks of information that simulate the main task, i.e., $\bm{f}(\bx) = [f_1(\bx),f_2(\bx),f_3(\bx)]^T$ where $f_1(\bx)$ is the main task.
In order to emulate the discrepancy between simulation and reality, the primary task employs nominal models, while supplementary tasks are subjected to disturbances. It is assumed that uncertainty resides in the filter models, given that the laser model can be accurately identified and exhibits minimal variation over time, while disturbance sources change more frequently. Furthermore, it is essential to ensure the safety of the primary task i.e., $\bm{g}(\bx)=[T-f_1(\bx),0,0]^T$, as the supplementary tasks are simulations, where $T=30$. Hence, evaluating unsafe regions will not damage the system, rather, this can help to estimate the safe region.
\begin{figure}[t]
    \centering
    \includegraphics[width = \textwidth]{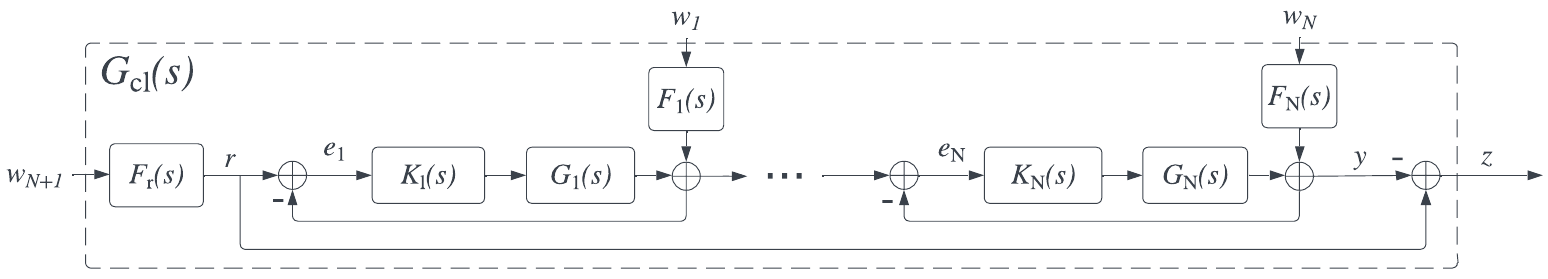}
    \caption{Illustration of the interconnected system. The blocks $F_r$ and $F_i, i=1,\dots,N$ denote disturbance filters which colorize the white noise inputs $w_j, j=1,\dots,N+1$. $G_i$ denote the laser plants and $K_i$ PI controllers for each subsystem.}
    \label{fig:lbsync}
\end{figure}
The goal is to minimize the root-mean-square seminorm of the performance output $z$ by tuning the PI parameters of the controllers $K_{i:N}$. Following Parseval's Theorem this corresponds to an $H_2$ minimization of the closed-loop system \cite{Heuer2018}.
Implementations are carried out using GPyTorch \cite{gardner2018gpytorch} and BoTorch \cite{balandat2020botorch}, and MCMC samples are generated with the No-U Turn Sampler algorithm \cite{Hoffman2014}. In addition, a Lewandowski-Kurowicka-Joe distribution \cite{Lewandowski2009} is used for prior distribution over correlation matrices, because it easily allows including prior knowledge about the expected correlation by adjusting the shape factor $\eta\in(0,\infty)$. Setting $\eta < 1$ matrices with higher correlation are favored, while for $\eta > 1$ low correlation matrices are favored. Note that, the definitions in Lemma~\ref{lemma:gamma},~\ref{lemma:lambda} provide error bounds to ensure safeness along all tasks, while in this setting, only the main task needs to be safe. Therefore, we neglect the influence of the uncertainty of the posterior mean in Theorem~\ref{theorem:betabar}, i.e., $\lambda^2 = 0$, to avoid overly conservative error bounds in the optimization. In addition, we select a constant scaling factor $\beta = 4$ and adjust the remaining hyperparameters such that the single-task optimization is safe.
Per BO step, one evaluation of the main task and 15 evaluations of the supplementary tasks are taken. 

\Cref{fig:comparison} summarizes the benchmark results. In (a), the robustness of the algorithm is investigated, where the supplementary tasks are constructed with disturbed filter transfer functions. The disturbances are generated by sampling from a uniform distribution with magnitude according to the line color in the legend. A $\pm10\%$ variation signifies that the state space model values can fluctuate by up to $10\%$ (in both directions) from their nominal values. (a) displays the average of the best observation from 20 instances of the optimization. For each instance, the filter models' disturbances are resampled. Clearly, the number of observations increases with the uncertainty of the supplementary tasks because the optimal solutions among the tasks may not perfectly match. Nevertheless, the optimal solutions are found for all uncertainties. In (b) the initial points are changing, and the models are fixed across the iterations. Additionally, the outcomes obtained by applying a Safe BO algorithm, similar to \texttt{SafeOpt} \cite{Sui2015} but with EI acquisition function, and \texttt{MoSaOpt}, a line search method from \cite{Luebsen2023}, are plotted. The mean values are denoted by the lines and the shaded area represents the standard deviation. \texttt{SaMSBO} outperforms both regarding solution quality and sample efficiency of the main task.
\begin{figure}[t]
    \centering
    \includegraphics[width = \linewidth]{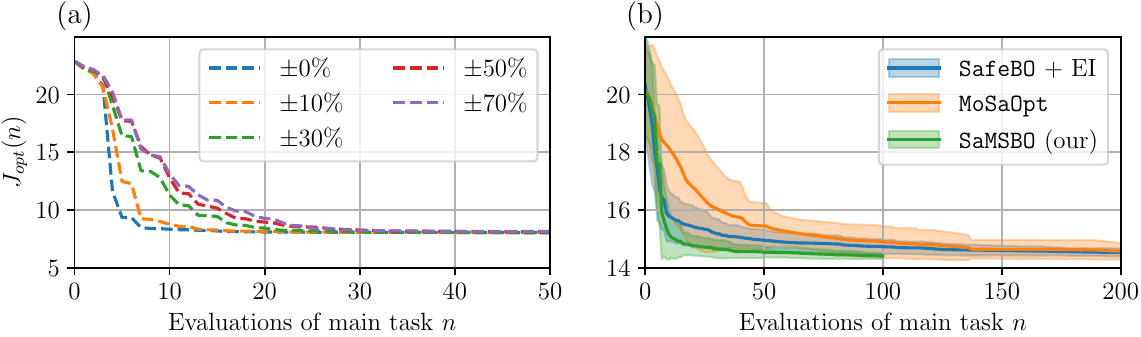}
    \caption{(a) shows the performance of \texttt{SaMSBO} by tuning a chain for $N = 2$ lasers, where the extra tasks have disturbed filter transfer functions. The line colors indicate the range of the disturbance. (b) shows the performance of different BO algorithms applied on a chain with $N = 5$ lasers. In this trial the filter disturbance lies in the range $\pm10\%$.}
    \label{fig:comparison}
\end{figure}

\section{Conclusion and Outlook}
We proposed the first robustly safe Bayesian optimization algorithm in a multi-task setting. We theoretically derived bounds that guarantee safeness for unknown correlation hyperparameters with high probability. Moreover, the proposed algorithm was benchmarked against other state-of-the-art methods in simulation. \texttt{SaMSBO} demonstrated superior solution quality and sample efficiency, ultimately achieving the most favorable convergence rate. However, one drawback is the doubled computation time per iteration compared to naive multi-task BO, due to the MCMC approximation of the hyper posterior, especially in later stages of the algorithm. Nevertheless, based on the simulation results, the total optimization time is still significantly reduced overall.

In the future, numerous aspects can be enhanced. Primarily, the scaling bounds specified in Lemma~\ref{lemma:gamma},~\ref{lemma:lambda} will become less stringent if safety is required for the main task solely, which is in real applications usually the case. Furthermore, including an efficient exploration technique of the safe set is crucial, particularly in high-dimensional scenarios. An approach to tackle this challenge has been introduced by \cite{Zagorowska2023}, wherein exploration is reformulated as an optimization problem which can be efficiently solved.
Finally, our plan involves testing the algorithm in a real environment by optimizing the LbSync system at European XFEL.

\acks{We acknowledge support from Deutsches Elektronen-Synchrotron DESY Hamburg, Germany, a member of the Helmholtz Association HGF. \copyright\ All figures and pictures under a CC BY 4.0 license.}

\bibliography{biblio.bib}

\ifthenelse{\equal{\isarxiv}{true}}{\appendix
\section{Proof of Lemma~\ref{lemma:gamma}}
\label{appendix:proof_gamma}
 \begin{lemma}[Schur Product Theorem \cite{horn2017}]
For two positive semidefinite matrices $A,B$ the Hadamard product is positive semidefinite, i.e., $A,B \geq 0$
implies $A \circ B \geq 0$ \label{lemma:schur_product}
\label{lemma:hadamard_prod}
\end{lemma}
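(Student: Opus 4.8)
The plan is to use the standard factorization characterization of positive semidefiniteness: a real symmetric matrix $M$ satisfies $M \geq 0$ if and only if $M = C C^T$ for some real matrix $C$, equivalently $M_{ij} = \langle c_i, c_j \rangle$ where $c_i$ denotes the $i$-th row of $C$. Such a $C$ exists for \emph{any} $M \geq 0$ via the spectral decomposition (take $C = Q\Lambda^{1/2}$ from $M = Q\Lambda Q^T$ with $\Lambda \geq 0$), so the argument handles the semidefinite and not merely the definite case.

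First I would write $A = C_A C_A^T$ and $B = C_B C_B^T$, so that $A_{ij} = \langle a_i, a_j \rangle$ and $B_{ij} = \langle b_i, b_j \rangle$ with $a_i, b_i$ the respective rows. The entries of the Hadamard product then factor as $(A \circ B)_{ij} = A_{ij} B_{ij} = \langle a_i, a_j \rangle \langle b_i, b_j \rangle = \langle a_i \otimes b_i,\, a_j \otimes b_j \rangle$, using the elementary identity $\langle u, u' \rangle \langle v, v' \rangle = \langle u \otimes v,\, u' \otimes v' \rangle$ for Kronecker products. Hence $A \circ B$ is itself the Gram matrix of the family $\{a_i \otimes b_i\}_i$ and is therefore positive semidefinite; explicitly, for any vector $z$ one has $z^T (A \circ B) z = \big\lVert \sum_i z_i\, (a_i \otimes b_i) \big\rVert^2 \geq 0$.

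An alternative route, which avoids Kronecker notation, uses the spectral decomposition $A = \sum_k \lambda_k q_k q_k^T$ with $\lambda_k \geq 0$: then $A \circ B = \sum_k \lambda_k\, (q_k q_k^T \circ B) = \sum_k \lambda_k\, D_k B D_k$ with $D_k = \mathrm{diag}(q_k)$, using the identity $q q^T \circ B = \mathrm{diag}(q)\, B\, \mathrm{diag}(q)$. Each summand is a congruence transform of $B \geq 0$, hence positive semidefinite, and a nonnegative combination of positive semidefinite matrices is positive semidefinite.

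There is no substantial obstacle here: the statement is the classical Schur product theorem. The only points requiring care are (i) that the factorization $M = C C^T$ must be invoked for semidefinite, not merely definite, matrices — handled by the spectral theorem — and (ii) bookkeeping with the index set in the Kronecker construction, where the outer index ranges over all pairs of columns of $C_A$ and $C_B$. For the write-up I would favor the spectral/congruence version, both for brevity and to match the $\mathrm{diag}(\cdot)$ notation already used in the paper.
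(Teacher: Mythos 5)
The paper does not prove this lemma at all: it is stated as the classical Schur product theorem and dispatched with a citation to Horn and Johnson, so there is no in-paper argument to compare against. Both of your proposed proofs are correct and complete. The Gram-matrix/Kronecker argument correctly reduces $(A\circ B)_{ij}=\langle a_i,a_j\rangle\langle b_i,b_j\rangle=\langle a_i\otimes b_i,\,a_j\otimes b_j\rangle$, exhibiting $A\circ B$ as a Gram matrix, and you rightly note that the factorization $M=CC^T$ is available for merely semidefinite $M$ via the spectral theorem. The spectral/congruence argument is equally sound: bilinearity of $\circ$ gives $A\circ B=\sum_k\lambda_k\,(q_kq_k^T\circ B)$, the identity $qq^T\circ B=\mathrm{diag}(q)\,B\,\mathrm{diag}(q)$ is an entrywise computation, and each summand is a congruence of $B\geq 0$. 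Either version would serve as a self-contained proof where the paper instead relies on an external reference; your preference for the $\mathrm{diag}(\cdot)$ formulation is reasonable given that the surrounding appendix already manipulates block and diagonal structures explicitly.
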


\begin{lemma}
Let $\bm{\Sigma}$ denote a matrix of the form
\begin{align*}
    \bm{\Sigma} = \begin{bmatrix}
    \sigma_{1,1}^2\mathds{1}_{N_1}\mathds{1}_{N_1}^T & \dots & \sigma_{1,u}^2\mathds{1}_{N_1}\mathds{1}_{N_u}^T\\
    \vdots & \ddots & \vdots \\
    \sigma_{u,1}^2\mathds{1}_{N_u}\mathds{1}_{N_1}^T & \dots & \sigma_{u,u}^2\mathds{1}_{N_u}\mathds{1}_{N_u}^T 
    \end{bmatrix},
\end{align*}
where $\mathds{1}_N$ is a column vector of ones with length $N$. Then, $\bm{\Sigma}$ is positive semidefinite if 
\begin{align*}
\begin{bmatrix}\sigma_{1,1}^2 & \dots & \sigma_{1,u}^2 \\
\vdots & \ddots & \vdots \\
\sigma_{u,1}^2 & \dots & \sigma_{u,u}^2\end{bmatrix} \geq 0.
\end{align*}
\label{lemma:decomposition}
\end{lemma}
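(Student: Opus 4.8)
The plan is to exhibit $\bm{\Sigma}$ as a congruence transform of the small matrix $S := [\sigma_{t,t'}^2]_{t,t'=1}^{u}$, so that positive semidefiniteness passes from $S$ to $\bm{\Sigma}$ without any case analysis. First I would introduce the ``expansion'' matrix $P \in \mathbb{R}^{(\sum_{t} N_t)\times u}$ defined as the block matrix whose $t$-th column block is $\mathds{1}_{N_t}$ placed in the rows belonging to task $t$ and zeros elsewhere (equivalently, the block-diagonal matrix with diagonal blocks $\mathds{1}_{N_1},\dots,\mathds{1}_{N_u}$). A direct block-wise computation then shows that the $(t,t')$ block of $P S P^{T}$ is $\mathds{1}_{N_t}\,S_{t,t'}\,\mathds{1}_{N_{t'}}^{T} = \sigma_{t,t'}^2\,\mathds{1}_{N_t}\mathds{1}_{N_{t'}}^{T}$, i.e. $\bm{\Sigma} = P S P^{T}$.

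Second, for an arbitrary $\bm{v}\in\mathbb{R}^{\sum_t N_t}$ I would set $\bm{w} := P^{T}\bm{v}\in\mathbb{R}^{u}$ and note $\bm{v}^{T}\bm{\Sigma}\,\bm{v} = \bm{w}^{T} S\,\bm{w} \geq 0$ by the hypothesis $S \geq 0$; hence $\bm{\Sigma}\geq 0$, which is the claim. (Equivalently, writing $S = L L^{T}$ gives $\bm{\Sigma} = (PL)(PL)^{T}$, manifestly positive semidefinite. If one instead wanted to route the argument through Lemma~\ref{lemma:schur_product}, one could observe that when all blocks share the common size $M := \max_t N_t$ one has $\bm{\Sigma} = S \otimes \big(\mathds{1}_M\mathds{1}_M^{T}\big)$, a Kronecker product of positive semidefinite matrices, and the unequal-size case follows by passing to a principal submatrix; but the factorization $\bm{\Sigma} = P S P^{T}$ sidesteps this.)

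I do not expect a genuine obstacle here: the only thing to handle carefully is the bookkeeping with the unequal block sizes $N_t$, and the single matrix $P$ absorbs that uniformly. The reason the lemma is stated is its use further on: it will be combined with the Schur Product Theorem (Lemma~\ref{lemma:schur_product}), applied to $\bm{K}_\Sigma = \bm{\Sigma}\circ\bm{K}$ with $\bm{K}$ the block matrix of base Gram matrices $K_{t,t'}$, to conclude positive semidefiniteness of the multi-task Gram matrix within the proof of Lemma~\ref{lemma:gamma}.
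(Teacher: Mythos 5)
Your proof is correct, and it takes a genuinely different (and shorter) route than the paper. You factor $\bm{\Sigma}=PSP^{T}$ with the block-diagonal ``expansion'' matrix $P=\mathrm{blkdiag}(\mathds{1}_{N_1},\dots,\mathds{1}_{N_u})$, so positive semidefiniteness of $S=[\sigma_{t,t'}^2]$ transfers to $\bm{\Sigma}$ by a one-line congruence argument ($\bm{v}^T\bm{\Sigma}\bm{v}=(P^T\bm{v})^TS(P^T\bm{v})\geq 0$). The paper instead takes the SVD of each $\mathds{1}_{N_t}$, writes $\bm{\Sigma}=\bm{U}\bm{E}_\Sigma\bm{U}^T$ with $\bm{U}$ block-diagonal orthogonal, permutes $\bm{E}_\Sigma$ so that the only nonzero block is $T\circ\Sigma$ with $T=[\sqrt{N_iN_j}]_{i,j}$, and then invokes the Schur product theorem. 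Since $T$ is rank one, $T\circ\Sigma=D\Sigma D$ with $D=\mathrm{diag}(\sqrt{N_1},\dots,\sqrt{N_u})$, so the paper's core step is itself a diagonal congruence in disguise; your argument reaches the same conclusion without the SVD, the permutation bookkeeping, or the Schur product theorem (which is still needed later in the proof of Lemma~\ref{lemma:gamma}, but not for this decomposition). Two minor remarks: your symbol $P$ clashes with the permutation matrix $P$ used in the paper's appendix, so rename one of them if the proofs are to coexist; and your argument cleanly establishes only the ``if'' direction, which is all the lemma states and all that the proof of Lemma~\ref{lemma:gamma} uses (the paper's ``if and only if'' phrasing is stronger than what its own Schur-product step actually delivers, since that theorem only goes one way).
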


\begin{proof}
The singular value decomposition of a vector $\mathds{1}_N$ can be written as $\mathds{1}_N = UE1$. The first column of $U$ corresponds to a vector of ones scaled to the unit length and the remaining columns are filled with orthonormal vectors. The only nonzero singular value is placed at the top of $E$ with $E_{1} = \sqrt{N}$ and the remaining entries are zero. Hence, we can rewrite $\bm{\Sigma}$ in the form
\begin{align*}
    \bm{\Sigma} = \begin{bmatrix}
    \sigma_{1,1}^2 U_1 E_1 E_1^T U_1^T & \dots & \sigma_{1,u}^2 U_1 E_1 E_u^T U_u^T\\
    \vdots & \ddots & \vdots \\
    \sigma_{u,1}^2\sigma_{u,1}^2 U_u E_u E_1^T U_1^T & \dots & \sigma_{u,u}^2U_u E_u E_u^T U_u^T
    \end{bmatrix}.
\end{align*}
    
Furthermore, by defining $\bm{U} = \mathrm{blkdiag}([U_i]_{i=1}^u)$, $\bm{\Sigma}$ can be decomposed, such that

\begin{align*}
\bm{U}\bm{E}_\Sigma\bm{U}^T= \bm{U} \begin{bmatrix}
\begin{bmatrix}
\sigma_{1,1}^2N_{11} & 0 & \cdots & 0 \\
0 & 0 & \cdots & 0 \\
\vdots & \vdots & \ddots & \vdots \\
0 & 0 & \cdots & 0 \\
\end{bmatrix} & \dots & \begin{bmatrix}
\sigma_{1,u}^2N_{u1} & 0 & \cdots & 0 \\
0 & 0 & \cdots & 0 \\
\vdots & \vdots & \ddots & \vdots \\
0 & 0 & \cdots & 0 \\
\end{bmatrix}\\
    \vdots & \ddots & \vdots \\
    \begin{bmatrix}
\sigma_{u,1}^2N_{1u} & 0 & \cdots & 0 \\
0 & 0 & \cdots & 0 \\
\vdots & \vdots & \ddots & \vdots \\
0 & 0 & \cdots & 0 \\
\end{bmatrix} & \dots & \begin{bmatrix}
\sigma_{u,u}^2N_{uu} & 0 & \cdots & 0 \\
0 & 0 & \cdots & 0 \\
\vdots & \vdots & \ddots & \vdots \\
0 & 0 & \cdots & 0 \\
\end{bmatrix}
    \end{bmatrix}  \bm{U}^T,
\end{align*}
where $N_{ij} = \sqrt{N_iN_j}$ is the product of the singular values. We define a permutation matrix that swaps the rows and columns, such that
\begin{align}
    P^T\bm{E}_\Sigma P = \begin{bmatrix}
\begin{array}{c|c}
\begin{matrix}\sigma_{1,1}^2N_{11} & \dots & \sigma_{1,u}^2N_{1u} \\
\vdots & \ddots & \vdots \\
\sigma_{u,1}^2N_{u1} & \dots & \sigma_{u,u}^2N_{uu}\end{matrix} & 0 \\ \hline
0 & 0
\end{array}
\end{bmatrix}.
\label{eq:permutation_applied}
\end{align}
 Since, for every permutation matrix, we have $P^T = P^{-1}$ \cite{horn2017} meaning that \eqref{eq:permutation_applied} is a similarity transformation. We can conclude that $\bm{\Sigma}$ is only positive semidefinite if and only if the upper left part of \eqref{eq:permutation_applied} is positive semidefinite.
 
Observe that the upper left matrix can be reformulated as a Hadamard product 
 \begin{align*}
    T\circ\Sigma = \begin{bmatrix}N_{11} & \dots & N_{1u} \\
\vdots & \ddots & \vdots \\
N_{u1} & \dots & N_{uu}\end{bmatrix} \circ \begin{bmatrix}\sigma_{1,1}^2 & \dots & \sigma_{1,u}^2 \\
\vdots & \ddots & \vdots \\
\sigma_{u,1}^2 & \dots & \sigma_{u,u}^2\end{bmatrix} \geq 0.
 \end{align*}
 Using Lemma~\ref{lemma:schur_product} and observe that $T$ is always positive semidefinite, we see that $\Sigma$ need to be positive semidefinite.
\end{proof}

Now, with Lemma~\ref{lemma:hadamard_prod} and Lemma~\ref{lemma:decomposition} we are able to proof Lemma~\ref{lemma:gamma}.
\begin{proof}[Lemma~\ref{lemma:gamma}]
From Lemma A.4 in \cite{Capone2022}, we know that it suffices to show that 
$\gamma^2 \bm{K}_{\Sigma'} \geq \bm{K}_{\Sigma}$ where $\bm{K}_{\Sigma},\bm{K}_{\Sigma'}$ are the Gram matrices of the joint distribution.
We assume to have for each of the $u$ tasks $N_i$ test and data points. Then, the Gram matrix is 
\begin{align*}
\bm{K}_{\Sigma}= 
\begin{bmatrix} 
\sigma_{1,1}^2K_{1,1}& \dots & \sigma_{1,u}^2K_{1,u}\\
\vdots & \ddots & \vdots \\
\sigma_{u,1}^2K_{u,1} & \dots & \sigma_{u,u}^2K_{u,u}
\end{bmatrix},
\end{align*}
where the matrices $K_{i,j} \in \mathbb{R}^{N_i\times N_j}$ and $\sigma_{i,j}^2$ are scalar. The equation can be rewritten as a Hadamard product
\begin{align*}
    \bm{K}_\Sigma = \bm{\Sigma} \circ \bar{\bm{K}} = \begin{bmatrix} 
\Sigma_{1,1} & \dots & \Sigma_{1,u}\\
\vdots & \ddots & \vdots \\
\Sigma_{u,1} & \dots & \Sigma_{u,u}
\end{bmatrix} 
\circ
\begin{bmatrix} 
K_{1,1}& \dots & K_{1,u}\\
\vdots & \ddots & \vdots \\
K_{u,1} & \dots & K_{u,u}
\end{bmatrix}, 
\end{align*}

where $\Sigma_{i,j} = \sigma_{i,j}^2 \mathds{1}_{N_i}\mathds{1}_{N_j}^T$.
 With the decomposition into the Hadamard product we have
\begin{align*}
    (\gamma^2\bm{\Sigma}'-\bm{\Sigma})\circ \bar{\bm{K}} \geq 0.
\end{align*}
According to Lemma~\ref{lemma:schur_product} positive semidefiniteness is preserved if we show that both $\gamma^2\bm{\Sigma}'-\bm{\Sigma}$ and $\bar{\bm{K}}$ are positive semidefinite. Clearly, this is true by definition for $\bar{\bm{K}}$ since this is the Gram matrix of a covariance function. Hence, it suffices to show that $\gamma^2\bm{\Sigma}'-\bm{\Sigma}$ is positive semidefinite.
Using Lemma~\ref{lemma:decomposition} this is true if $\gamma^2 \geq \max\mathrm{eig}(\Sigma'^{-1}\Sigma)$. By the definition of the set $\mathcal{C}_{\Sigma',\Sigma''}$ this holds for all $\Sigma \in \mathcal{C}_{\Sigma',\Sigma''}$ if 
\begin{align*}
\gamma^2 \geq h(\Sigma',\Sigma'') = \max\mathrm{eig}(\Sigma'^{-1}\Sigma'').
\end{align*}
\end{proof}


\section{Proof of Lemma~\ref{lemma:lambda}}
\label{appendix:proof_lambda}
\begin{proof}[Lemma~\ref{lemma:lambda}]
Note that both RKHS have the same base kernel $k(\cdot,\cdot)$ and the only difference lies in the multiplication of different correlation matrices. Hence, we need to find a formulation for $||\cdot||_K$ in terms of $||\cdot||_k$. Let $\bm{\mu}(\bx) = B\, \bm{u}(\bx)$, $\bm{\mu}\in\mathcal{H}_K$ with kernel $K(\bx,\bx') = \Sigma\, k(\bx,\bx')$ where $\Sigma = BB^T > 0$, and latent functions $\bm{u}=[u_i]_{i=1}^u, u_i\in\mathcal{H}_k$, with kernel $k(\bx,\bx')$. First, we are seeking for basis functions in terms of $k(\cdot,\cdot)$ that construct $K(\cdot,\cdot)$. Let $K(\cdot,\bx) = B^T \otimes k(\cdot,\bx)$, we have

\begin{center}
\begin{tabular}{C C L C L }
      K(\bx,\bx') & = & \langle K(\bx,\cdot),K(\bx',\cdot)\rangle_K & = & \langle B^T\otimes k(\bx,\cdot),B^T\otimes k(\bx',\cdot)\rangle_k\\
     & = & BB^T \otimes k(\bx,\bx') & = & \Sigma\, k(\bx,\bx').
\end{tabular}
\end{center}
After validating the basis functions, we need to find a formulation for $\bm{\mu}(\bx)$ in the space $\mathcal{H}_K$. With the reproducing property $\bm{u}(\bx)=\langle\bm{u},k(\bx,\cdot)\rangle_k$, we have

\begin{center}
\begin{tabular}{C C L C L}
     \bm{\mu}(\bx) & = &  B\, \bm{u}(\bx)
     & = &   \langle k(\bx,\cdot),\bm{u}B^T\rangle_k \\
    &=& \langle k(\bx,\cdot), \bm{\mu}_k\rangle_k  &=&
     \langle \mathrm{vec}(\bm{\mu}_k),(B^\dag)^T B^T\otimes k(\bx,\cdot)\rangle_k\\
    &=&  \langle \mathrm{vec}(\bm{\mu}_k(B^\dag)^T),B^T\otimes k(\bx,\cdot)\rangle_k 
    &=&  \langle \bm{\mu}_K,K(\bx,\cdot)\rangle_K,
\end{tabular}
\end{center}
where $B^\dag$ is the pseudoinverse of $B$. We used here the mixed Kronecker matrix-vector product, $\mathrm{vec}(ABC^T) = (C\otimes A)\mathrm{vec}(B)$.
Thus, the norm in $\mathcal{H}_K$ can be rewritten as

\begin{center}
    \begin{tabular}{C C L C L}
         ||\bm{\mu}_K||_K^2 &=& \langle\bm{\mu}_K,\bm{\mu}_K\rangle_K \nonumber & = & \langle\mathrm{vec}(\bm{\mu}_k(B^\dag)^T),\mathrm{vec}(\bm{\mu}_k(B^\dag)^T)\rangle_k\\
    & = & \langle(B^\dag\otimes I)\mathrm{vec}(\bm{\mu}_k),(B^\dag\otimes I)\mathrm{vec}(\bm{\mu}_k)\rangle_k & = & \langle(\Sigma^{-1}\otimes I)\mathrm{vec}(\bm{\mu}_k),\mathrm{vec}(\bm{\mu}_k)\rangle_k\\
    & = & \sum_{i,j}^u \Sigma_{ij}^{-1}\langle \mu_i,\mu_j\rangle_k \nonumber & = & \mathds{1}^T (\Sigma^{-1} \circ M) \mathds{1}.
    \end{tabular}
\end{center}
Now we are able to compare the norms in both spaces:
\begin{alignat*}{3}
     & \lambda^2||\bm{\mu}||_{K'}^2 \quad &\geq& \quad ||\bm{\mu}||_{K''}^2 \nonumber\\
   \Leftrightarrow& \lambda^2 \Sigma'^{-1} \circ M \quad &\geq& \quad \Sigma''^{-1} \circ M\nonumber\\
   \Leftrightarrow&  (\lambda^2 \Sigma'^{-1}-\Sigma''^{-1}) \circ M \quad &\geq& \quad 0.  \nonumber
\end{alignat*}
Since $M$ is positive semidefinite and applying Lemma~\ref{lemma:schur_product}, we obtain
\begin{align*}
    \lambda^2  \geq h(\Sigma'',\Sigma') =  \max\mathrm{eig}\Sigma''^{-1}\Sigma'.
\end{align*}


\end{proof}


\end{document}